\pgfplotsset{compat=newest}
\newtheorem{lemma}{Lemma}
\theoremstyle{thmstyleone}%
\newtheorem{theorem}{Theorem}
\theoremstyle{thmstyletwo}%
\theoremstyle{thmstylethree}%
\begin{document}

\title[Article Title]{Stochastic Multi-Objective Multi-Armed Bandits: Regret Definition and Algorithm}

\author*[1]{\fnm{Mansoor} \sur{Davoodi}}\email{Mansoor.DavoodiMonfared@ruhr-uni-bochum.de}

\author[1]{\fnm{Setareh} \sur{Maghsudi}}\email{Setareh.Maghsudi@ruhr-uni-bochum.de}

\affil*[1]{\orgdiv{Faculty of Electrical Engineering and Information Technology}, \orgname{Ruhr-University Bochum}, \orgaddress{\street{Universitätsstraße}, \city{Bochum}, \postcode{44801}, \state{NRW}, \country{Germany}}}


\abstract{Multi-armed bandit (MAB) problems are widely applied to online optimization tasks that require balancing exploration and exploitation. In practical scenarios, these tasks often involve multiple conflicting objectives, giving rise to multi-objective multi-armed bandits (MO-MAB). Existing MO-MAB approaches predominantly rely on the Pareto regret metric introduced in \cite{drugan2013designing}. However, this metric has notable limitations, particularly in accounting for all Pareto-optimal arms simultaneously. To address these challenges, we propose a novel and comprehensive regret metric that ensures balanced performance across conflicting objectives. Additionally, we introduce the concept of \textit{Efficient Pareto-Optimal} arms, which are specifically designed for online optimization. Based on our new metric, we develop a two-phase MO-MAB algorithm that achieves sublinear regret for both Pareto-optimal and efficient Pareto-optimal arms.}

\keywords{ sochastic, multi-armed bandits, multi-objective, regret, Pareto-optimal}



\maketitle

\section{Introduction}
Multi-armed bandits (MAB) form a foundational framework in machine learning with wide-ranging applications in online optimization tasks such as recommendation systems and online advertising \cite{slivkins2019introduction}. The objective in MAB problems is to sequentially select actions (arms) that balance exploration (learning about unknown arms) and exploitation (maximizing cumulative rewards). In many real-world scenarios, decisions involve trade-offs among multiple conflicting objectives, which has led to the study of multi-objective multi-armed bandits (MO-MAB). In MO-MAB, the goal is to identify Pareto-optimal (PO) solutions that represent the best trade-offs among the objectives, making this problem highly relevant in areas such as multi-criteria decision-making \cite{wei2021multi}.

Despite growing interest in MO-MAB, most existing work relies on the Pareto regret metric introduced by Drugan et al. \cite{drugan2013designing}. Although this metric serves as a useful starting point and has been widely adopted, it has several significant limitations. Specifically, the metric focuses on the minimum distance of an arm’s reward vector to the Pareto front in one direction, neglecting the performance across other objectives. This can lead to situations where algorithms optimizing a single objective are evaluated as highly effective, even though they may fail to balance other objectives (e.g., see \cite{xu2023pareto}). Moreover, the metric inadequately penalizes poor performance in unoptimized objectives and fails to ensure diversity in the objective space (for a detailed discussion of this metric, see Section \ref{Drugan regret}). Consequently, existing regret metrics are insufficient for fully evaluating the performance of MO-MAB algorithms.

To address these limitations, a more comprehensive regret definition that accounts for multiple objectives is crucial for a more accurate evaluation of algorithms in multi-objective settings. In this paper, we make the following key contributions:

\begin{itemize} 
\item We propose a novel and comprehensive regret metric for MO-MAB that overcomes the limitations of existing metrics by simultaneously considering all objectives. \item We introduce the concept of \emph{Efficient Pareto-Optimal} (EPO) arms, tailored for online optimization settings, to better capture the round-based nature of MO-MAB problems. 
\item We develop a two-phase explore-exploit algorithm that achieves sublinear regret for both PO and EPO arms. For $n$ arms over $T$ rounds, it offers two implementations: an exponential-time variant with regret \( O\left( T^\frac{2}{3} (n \log T)^\frac{1}{3} \right) \), and a polynomial-time variant achieving \( O\left( \log n \cdot T^\frac{2}{3} (n \log T)^\frac{1}{3} \right) \).
\item We develop a two-phase explore-exploit algorithm that achieves sublinear regret for both PO and EPO arms. For $n$ arms over $T$ rounds, it offers two implementations: an exponential-time variant with regret \( O\left( T^\frac{2}{3} (n \log T)^\frac{1}{3} \right) \), and a polynomial-time variant achieving \( O\left( \log n \cdot T^\frac{2}{3} (n \log T)^\frac{1}{3} \right) \).

\end{itemize}
\section{Related Work}
Traditional Multi-Armed Bandit (MAB) problems focus on maximizing a single cumulative reward \cite{slivkins2019introduction}, but many real-world applications involve multiple conflicting objectives. This complexity has led to the development of Multi-Objective Multi-Armed Bandits (MO-MAB), where the goal is to maximize the reward in multiple objectives simultaneously by pulling Pareto-optimal arms. A key challenge in this area is the evaluation of MO-MAB algorithms, particularly the design of an appropriate regret metric.

Drugan and Nowé \cite{drugan2013designing} introduced the first Pareto regret framework for MO-MAB, combining scalarization with an exploration-exploitation algorithm called \textit{Pareto-UCB1}, which demonstrated logarithmic regret bounds. Their empirical studies \cite{drugan2014pareto} validated the algorithm on multi-objective Bernoulli distributions. However, challenges remain in balancing performance across objectives and ensuring diversity along the Pareto front. These limitations of the Pareto regret metric and Pareto-UCB1 are discussed in Section \ref{Drugan regret}. Before delving into these issues, we provide an overview of related work and how recent studies have built on the Pareto regret framework.

Mahdavi et al. \cite{mahdavi2013stochastic} applied stochastic convex optimization techniques to MO-MAB, introducing scalarization functions (e.g., Chebyshev and linear scalarization) to address the complexity of optimizing multiple objectives under uncertainty. Yahyaa et al. \cite{yahyaa2014annealing} developed the \textit{Annealing-Pareto} algorithm, which adapts the annealing concept to MO-MAB, dynamically adjusting exploration intensity to improve the trade-off between exploration and exploitation. Yahyaa and Manderick \cite{yahyaa2015thompson} applied Thompson sampling to MO-MAB, selecting arms based on their posterior distributions to manage uncertainty.

Busa-Fekete et al. \cite{busa2017multi} proposed a MAB framework using the generalized Gini index, allowing the algorithm to prioritize better trade-offs and balance multiple objectives more effectively. They provided both theoretical analysis and empirical results showing improvements in exploration and exploitation. \textit{Öner et al.} \cite{oner2018combinatorial} extended MAB by studying the combinatorial MO-MAB problem, where multiple arms can be selected at each round. Their algorithm combines exploration-exploitation strategies with combinatorial optimization to address conflicting objectives and constraints. Lu et al. \cite{lu2019multi} developed a framework for generalized linear bandits, applying regret minimization techniques to handle multi-objective settings, offering both theoretical guarantees and empirical validation.

Xu and Klabjan \cite{xu2023pareto} extended the Pareto regret framework to adversarial settings in MAB, presenting algorithms for both stochastic and adversarial scenarios. However, their focus on optimizing a single objective led to suboptimal solutions in multi-objective contexts, where only one direction of the Pareto front was considered.

Turgay et al. \cite{turgay2018multi} integrated contextual information into MO-MAB, optimizing the decision-making process by considering contextual relationships between arms and objectives. H{\"u}y{\"u}k and Tekin \cite{huyuk2021multi} developed an algorithm incorporating lexicographical ordering (prioritizing objectives) and satisfying (ensuring objectives exceed thresholds). Their analysis offered insights into improving the efficiency of MO-MAB. Xue et al. \cite{xuemultiobjective} extended this work by generalizing lexicographically ordered MO-MAB from priority-based regret to general regret.

Cheng et al. \cite{cheng2024hierarchize} proposed an algorithm that hierarchizes Pareto dominance to improve regret minimization by prioritizing objectives and refining exploration strategies. Ararat and Tekin \cite{ararat2023vector} introduced a framework based on a polyhedral ordering cone to define directional preferences among vector rewards, replacing traditional Pareto dominance, and established gap-dependent and worst-case sample complexity bounds for it. Crépon et al. \cite{garivier2024sequential} studied the challenge of learning and identifying Pareto-optimal arms under the stochastic multi-armed bandit framework. They presented an algorithm to guarantee suboptimality relative to the true Pareto front. Building upon this, Karag{\"o}zl{\"u} et al. \cite{karagozlu2024learning} addressed the challenge of learning the Pareto-optimal set under incomplete preferences in a pure exploration setting, providing an algorithm to identify Pareto-optimal arms.

In general, the approaches aiming to minimize Pareto regret, as defined by \cite{drugan2013designing}, either directly (e.g., \cite{xu2023pareto}) or indirectly (e.g., \cite{drugan2014pareto}), often focus on optimizing along a single objective direction. While this can result in non-dominated solutions in one dimension, it fails to capture the entire Pareto front. To address these limitations, we propose a new regret metric that comprehensively evaluates performance across multiple objectives.
\section{Analysis of Pareto Regret Used in the Literature}
\label{Drugan regret}

In 2013, Drugan et al. \cite{drugan2013designing} introduced the first regret metric for evaluating the efficiency of MO-MAB algorithms, known as \textit{Pareto regret}. This metric quantifies, for each arm \( a \), the distance between its reward vector \(\mu(a)\) and the Pareto-optimal (PO) set \(\mathcal{A}^*\). To do this, a \textit{virtual reward vector} \( \nu(a)^* \) is constructed by adding a value \(\epsilon\) to each objective of \( a \) until it becomes \textit{incomparable} (non-dominating) with any arm in \(\mathcal{A}^*\). The regret \( \Delta(a) \) is then defined as the difference between \( \nu(a)^* \) and the original reward vector \(\mu(a)\), effectively representing the minimum distance to the Pareto front.
While this metric is simple and widely adopted, it has several notable limitations. The most significant issue is that it evaluates arms based solely on their distance to the Pareto front in one direction, without considering their performance across all objectives. This can lead to situations where algorithms optimizing a single objective achieve low regret even if they perform poorly on other objectives. For example, algorithms designed to prioritize one objective (e.g., see \cite{xu2023pareto}) may appear effective under this metric despite failing to achieve a well-balanced multi-objective performance.

Figure \ref{DruganCounterExample} (left panel) illustrates this problem in a MO-MAB instance with three arms. Here, the Pareto UCB algorithm presented in \cite{drugan2013designing} repeatedly (and fairly) selects the suboptimal arm with reward \( (0,0) \), despite it being strictly dominated by arms \( (1,0) \) and \( (0,1) \). Indeed, the algorithm only does not select this dominated arm in the second round. Although the regret bound \(\sum_{a \notin A^*} \frac{8 \cdot \log \left( T \sqrt[4]{D |A^*|} \right)} {\Delta_a}\), where $T$ is the number of plays and $D$ is the number of objectives, claimed by \cite{drugan2013designing} holds, the choice of \(\Delta_a\) as a small positive value can lead to poor practical performance. This issue is compounded in higher-dimensional settings, where the number of suboptimal arms grows exponentially with the number of objectives, dramatically reducing the probability of selecting even a PO arm in each round.

Additionally, Drugan et al. introduced a \textit{fairness} concept to encourage more balanced performance across the PO arms. However, this metric assumes a uniform distribution of PO solutions along the Pareto front, a condition that is rarely met in real-world problems. For example, Figure \ref{DruganCounterExample} (right panel) shows a non-uniform Pareto front where fairness fails to ensure diversity. When the algorithm plays "fairly," as defined by \cite{drugan2013designing}, it may repeatedly select arms clustered near one extreme of the Pareto front, neglecting other promising regions.

These challenges highlight the shortcomings of the existing Pareto regret metric. Although it provides useful insights in certain cases, it does not adequately capture the trade-offs inherent in MO-MAB problems. Therefore, there is a clear need for a more comprehensive regret metric that evaluates algorithm performance holistically, accounting for both balance across objectives and diversity along the Pareto front.

\begin{figure}[h]
    \centering
    \hspace{-0.7cm} 
    \begin{subfigure}[b]{0.4\textwidth} 
        \raggedright 
        \begin{tikzpicture}[scale=1.5]
            \draw[->] (-0.3,0) -- (1.2,0) node[right] {$f_1$};
            \draw[->] (0,-0.3) -- (0,1.2) node[above] {$f_2$};
        
            \filldraw[black] (0,0) circle (2pt) node[below left] {$(0,0)$};
            \filldraw[black] (1,0) circle (2pt) node[below] {$(1,0)$};
            \filldraw[black] (0,1) circle (2pt) node[left] {$(0,1)$};
        
            \draw[blue, thick] (1,0) -- (0,1);
        \end{tikzpicture}
        \label{example3points}
    \end{subfigure}
    \hspace{-0.5cm} 
    \begin{subfigure}[b]{0.4\textwidth} 
        \raggedleft 
        \begin{tikzpicture}[scale=0.6]
            \draw[->] (-0.2,0) -- (4.5,0) node[right] {$f_1$};
            \draw[->] (0,-0.2) -- (0,4.5) node[above] {$f_2$};
        
            \draw[thin,domain=0:4,samples=100,smooth,variable=\x] 
                plot ({\x},{4-\x^2/4});
        
            \foreach \x in {0, 0.1, 0.2, 0.3, 0.4, 0.5, 0.6, 0.7, 0.8, 0.9, 1.0, 1.1} {
                \filldraw[black] (\x,{4-\x^2/4}) circle (1.5pt);
            }
        
            \filldraw[black] (2.8,{4-2.8^2/4}) circle (1.5pt);
        
            \filldraw[black] (4,{4-4^2/4}) circle (1.5pt);
        \end{tikzpicture}
        \label{nondiverse_pareto_front}
    \end{subfigure}
    \caption{(Left panel) An MO-MAB instance with three arms with reward objectives (1,0), (0,1) and (0,0). (Right panel) Sample PO solutions of a bi-objective maximization problem that are not spread uniformly along the Pareto front.}
    \label{DruganCounterExample}
\end{figure}
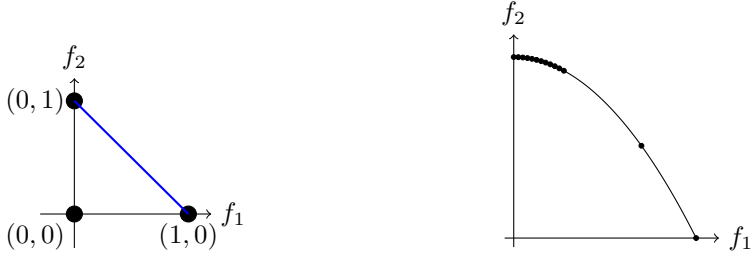

\section{Multi-objective Multi-Armed Bandits}
\subsection{Domination and Efficient Pareto Optimal Arms}
The main objective of solving multi-objective optimization problems is to identify a set of solutions that are close to the Pareto-Optimal (PO) solutions, while simultaneously ensuring a high degree of diversity within the objective space. This involves two orthogonal goals: first, to approximate the PO set as closely as possible, guaranteeing that the solutions reflect the best trade-offs among the competing objectives; and second, to maintain sufficient diversity among these solutions, covering various regions of the objective space. Such diversity is crucial for addressing different preferences among decision-makers effectively \cite{coello2007evolutionary}. These goals, and particularly the first one, can be extended naturally to online optimization and stochastic multi-objective multi-armed bandits (MO-MAB).

Given an instance of a MO-MAB problem with \( n \) arms, denoted by \(\mathcal{A}=\{a_1,a_2,...,a_n \}\), and $D$ (maximization) objectives \( \mathcal{F}=\{f_1, f_2, \dots, f_D\} \), let the reward vector of arm \( a \in \mathcal{A}\) at time (or round) \( t \) be denoted as:
\[
\mathbf{r}^t(a) = (r_{1}^t(a), r_{2}^t(a), \dots, r_{D}^t(a)),
\]
where \( r_{d}^t(a) \) is the reward of arm \( a \) in the \( d \)-th objective (or dimension) at time \( t \). In the stochastic setting of the problem
\[
\sum_{t=1}^T \mathbf{r}^t(a) = (T \mu_{1}(a), T \mu_{2}(a), \dots, T\mu_{D}(a)), 
\]

where $\mu_d(a)$ is the expected reward of arm $a$ in dimension $d$.

In the multi-objective context, an arm \( a \) is said to \textit{dominate} another arm \( b \) (denoted as \( a \succ b \)) if and only if \( a \) is at least as good as \( b \) in all objectives, and is strictly better in at least one objective. Formally, for two reward vectors \( \mathbf{r}^t(a) = (r_{1}^t(a), r_{2}^t(a), \dots, r_{D}^t(a)) \) and \( \mathbf{r}^t(b) = (r_{1}^t(b), r_{2}^t(b), \dots, r_{D}^t(b)) \), arm \( a \) dominates arm \( b \) if:

\begin{itemize}
    \item \(
r_{d}^t(a) \geq r_{d}^t(b) \quad \text{for all } d \in [D], \quad \text{and} 
\)

\item \(
r_{d}^t(a) > r_{d}^t(b) \quad \text{for at least one } d\in [D],
\)
\end{itemize}
where $[D]$ denotes the first $D$  positive integers, $[D]=\{1, 2, \dots, D\}$. Similarly, \( a \) is said to \textit{weakly dominate} \( b \) (denoted as \( a \succeq b \)) if and only if \( a \) is at least as good as \( b \) in all objectives, i.e., \(
r_{d}^t(a) \geq r_{d}^t(b) \quad \text{for all } d \in [D]\).
So, an arm \( a \in \mathcal{A} \) is a \textit{non-dominated}, if there does not exist any other arm \( b \in  \mathcal{A}\) such that \( b \) dominates \( a \) (i.e., \(\forall b \in  \mathcal{A}: b \nsucc a\)). 
Finally, the set of all non-dominated arms is called the \textit{Pareto-optimal} (PO) set and denoted by \( \mathcal{A^*} \). 
 The image of PO arms in the objective space is called, \textit{Pareto Front}.

In online optimization and MAB, the iterative nature of the process shifts the objective from single-instance optimization to maximizing cumulative rewards over a given number of rounds. Unlike classical optimization, where a single decision is made, the decision-maker in this context selects (or pulls) from available choices iteratively over \( T \) rounds. Therefore, we introduce the concept of the \textit{Efficient Pareto-Optimal} (EPO) set, which represents an efficient subset of the PO set, specifically tailored to maximize cumulative performance in an iterative decision-making setting. To this end, we need to extend the definition of domination to a subset of arms.

Let \(S=\left ( a^1,a^2,\dots ,a^T \right) \) be the sequence of arms selected by a decision-maker (arm \( a^t \) at round \( t \in [T] \) is pulled). So, the cumulative reward vector of $S$ is given by:

\[
\bar{\mathbf{r}}(S) = \sum_{t=1}^T \mathbf{r}(a^t).
\]

We can restrict the sequence of selected arms to any subset of arms. Let \( X \subseteq \mathcal{A}^* \) and \( Y \subseteq \mathcal{A}^* \) be two different subsets of PO arms, and \( S_X \) and \( S_Y \) be two sequences of arms selected (only) from $X$ and $Y$, respectively. We say \( X \) weakly dominates \( Y \), denoted \( X \succeq Y \), if for any sequence \( S_Y \), there exists a sequence \( S_X \) such that the cumulative reward vector of \( S_X \) weakly dominates that of \( S_Y \).
Formally,

\[
X \succeq Y \iff \forall S_Y \in Y, \, \exists S_X \in X \text{ such that } \bar{\mathbf{r}}(S_X) \succeq \bar{\mathbf{r}}(S_Y).
\]

Note that \(\bar{\mathbf{r}}(.)\) is defined for a sufficiently large number of rounds. To be precise, we can express this as:

\begin{align*}
X &\succeq Y \notag \iff \forall S_Y \in Y, \, \exists S_X \in X \colon \\ 
  &\exists T_0 \in \mathbb{N}, \forall T \geq T_0, \, \bar{\mathbf{r}}(S_X) \succeq \bar{\mathbf{r}}(S_Y). 
\end{align*}

In terms of the domination relation between sets of individual arms, two scenarios are possible: either one set dominates the other, or the sets are non-dominated with respect to each other. 
Now, we can define EPO set \(\mathcal{EA}^*\) as

\begin{align}
\mathcal{EA}^* = \left\{ a^* \in \mathcal{A}^* | \nexists S \subseteq \mathcal{A}^* \setminus \{a^*\}, \bar{r}(S) \succeq \bar{r}(\{a^*\}) \right\}
\end{align}

As an example, assume a simple instance of a MO-MAB problem with three PO arms $a$, $b$ and $c$ with reward vectors \( \mathbf{r_a} = (1,0) \), \( \mathbf{r_b} = (0,1) \), and \( \mathbf{r_c} = (\epsilon, \epsilon) \) for some small positive value \( \epsilon \). In this scenario, when a decision-maker pulls arms \( a \) and \( b \) iteratively, the average reward vector (the cumulative reward is a similar discussion) achieved is \( \left( \frac{1}{2}, \frac{1}{2} \right) \). However, if the decision-maker pulls arm \( c \), the resulting average reward vector would be \( \left( \epsilon, \epsilon \right) \), which is lower than the first case for \( \epsilon \leq \frac{1}{2} \). In this sense, \( \{a, b\} \succ \{c\} \).
In this example, the positions of the arms form a symmetric configuration. For this symmetry reason, if arms \( a \) and \( b \) are pulled equally often, the cumulative reward vector \( \{a, b\} \) \textit{always} dominates the reward of arm \( c \).
In the general case, suppose arm \( a \) is pulled \( n_a \) times and arm \( b \) is pulled \( n_b \) times, where \( n_a + n_b = T \). So, the resulting average reward vector is a linear combination of the reward vectors of \( a \) and \( b \) with the weights \(w_a = \frac{n_a}{T}\) and \(w_b = \frac{n_b}{T}\), respectively. As illustrated in Figure~\ref{EfficientPareto12} (the left panel), this average vector will dominate any other arm whose reward vector is located within the gray region below and to the left of it. Although adjusting \( n_a \) and \( n_b \) can potentially cover the entire area below the line segment on the Pareto front between \( a \) and \( b \), for any fixed combination of \( n_a \) and \( n_b \), there remain two triangular regions that are not dominated by it.
However, on the reverse side, for any arm \( c \) below the Pareto segment connecting \( a \) and \( b \) on the Pareto front, there exists a specific range of values for \( n_a \) and \( n_b \) such that the average vector \( \frac{n_a}{T} a + \frac{n_b}{T} b \) dominates \( c \) (illustrated in the right panel of Figure~\ref{EfficientPareto12}).

\begin{figure}[h]
    \centering
    \begin{subfigure}[b]{0.40\textwidth}
        \centering
        \begin{tikzpicture}[scale=2]
            \fill[gray!20] (0.6,0.4) -- (0.6,0) -- (0,0) -- (0,0.4) -- cycle;

            \draw[->] (-0.2,0) -- (1.3,0) node[right] {$f_1$};
            \draw[->] (0,-0.2) -- (0,1.3) node[above] {$f_2$};

            \draw[gray, thin] (0.6,0.4) -- (0.6,0);
            \draw[gray] (0.6,0.4) -- (0,0.4);

            \draw[blue, thick] (1,0) -- (0,1);

            \filldraw[black] (1,0) circle (2pt) node[below right] {$a$};
            \filldraw[blue] (0.6,0.4) circle (1.5pt) node[above right] {$w_a a + w_b b$};
            \filldraw[black] (0,1) circle (2pt) node[above left] {$b$};
        \end{tikzpicture}
        \label{EfficientPareto1}
    \end{subfigure}
    \hspace {0.1cm}
    \begin{subfigure}[b]{0.40\textwidth}
        \centering
        \begin{tikzpicture}[scale=2]
            \fill[gray!20] (0.2,0.2) -- (0.2,1.1) -- (1.1,0.2) -- cycle;

            \draw[->] (-0.2,0) -- (1.5,0) node[right] {$f_1$};
            \draw[->] (0,-0.2) -- (0,1.5) node[above] {$f_2$};

            \draw[blue, thick] (1,0) -- (0.2,0.2);
            \draw[blue, thick] (0.2,0.2) -- (0,1);

            \draw[gray, thin] (0.2,0.2) -- (0.2,1.1);
            \draw[gray] (0.2,0.2) -- (1.1,0.2);

            \draw[black, dashed] (1,0) -- (0,1);
            \draw[black, very thick] (0.2,0.8) -- (0.8,0.2);

            \filldraw[black] (1,0) circle (2pt) node[below right] {$a$};
            \filldraw[black] (0.2,0.2) circle (2pt) node[below left] {$c$}; 
            \filldraw[black] (0,1) circle (2pt) node[above left] {$b$};
        \end{tikzpicture}
        \label{EfficientPareto2}
    \end{subfigure}
    \caption{Dominance regions in MO-MAB. (Left panel) Dominating region of a linear combination of two PO arms with rewards (1,0) and (0,1). (Right panel) An MO-MAB instance with three arms with reward objectives $a=(1,0)$, $b=(0,1)$ and $c=(0.2,0.2)$. $c$ can be dominated by some linear combinations of $a$ and $b$ in the gray region.}
    \label{EfficientPareto12}
\end{figure}
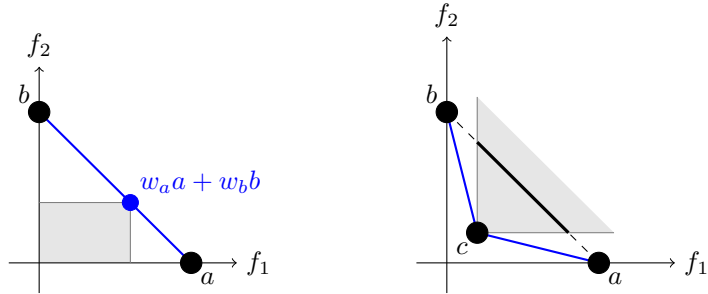

Note that, in this example, \( \{a, b\} \succ \{c\} \) holds while \( c \) lies below the line segment. If \( c \) lies above it, no pair of \( n_a \) and \( n_b \)  will be able to dominate \( c \). This means, that no linear combination of $a$ and $b$ can dominate \(c\). In general, an arm $a$ at a given time $t$ is weakly dominated by some linear combinations of \(a_1,a_2,\dots, a_l\), if and only if, there exist \(\alpha=<\alpha_1,\alpha_2,...,\alpha_l>  \) such that 
\begin{itemize}
    \item 
\(
\sum_{i=1}^l \alpha_i = 1, \text{and}
\)
\item 
\(
\sum_{i=1}^l \alpha_i r^t(a_i) \succeq r^t(a)
\)
\end{itemize}

As aforementioned, $r^t(a_i)$ is $\mu(a_i)$ in the stochastic MO-MAB. Thus, the set \( \mathcal{EA}^* \) represents those PO arms that lie in the convex position on the Pareto front. Therefore, after finding $\mathcal{A}^*$, the arms of $\mathcal{EA}^*$ can be efficiently found in polynomial time using a linear programming approach with $n^*-1$ variables and $D$ constraints, where $n^* = |\mathcal{A}^*|$, and we do not need to find the convex hull of the arms which has exponential complexity. As a result of such a linear programming problem, if there is no feasible solution \(\alpha=(\alpha_1,\alpha_2,...,\alpha_l)  \), arm $a$ belongs to $\mathcal{EA}^*$, otherwise, at least one vector $\alpha$ that \(\sum_{i=1}^l \alpha_i r^t(a_i) \) weakly dominates $r^t(a)$ is computed.

\subsection{Definition of Regret in MO-MAB}
In MO-MAB problems, since there is more than one PO arm, the definition of \textit{regret} is more complex. Most studies over the last decade have relied on the definition introduced in \cite{drugan2013designing}, which is discussed in Section \ref{Drugan regret}. In this paper, we propose a more comprehensive definition of regret that is better suited to MO-MAB. Our definition is formulated to be applicable to both PO arms and EPO arms.

An algorithm $Alg$ that chooses an arm \( a^t \) at time $t =1,2,\dots, T$, has a regret \( R\) compared to an arm \( a^* \in \mathcal{A^*} \) (or alternatively an EPO arm \( a^* \in \mathcal{EA^*} \)), if 
\[
\mathbb{E} \left[ \sum_{t=1}^T r_{d}^t(a^*) - \sum_{t=1}^T r_d^t(a) \right] \leq R, \quad \forall d \in [D].
\]
In the stochastic setting, we can replace \( \mathbb{E} \left[ \sum_{t=1}^T r_{d}^t(a^*) \right] \) with \(T \mu_d(a^*)\). Let denote this concept by \(R_{Alg,a^*}(T) \leq {R}\). 

Notably, the above definition of regret is stronger than the definition of "non-domination" used in \cite{drugan2013designing}. Here, $R_{\text{Alg},a^*}(T)$ guarantees that the outcome is not only non-dominated compared to $a^*$ but also dominates it if it is improved by the regret value. Specifically, it is necessary to satisfy the inequality across all objectives, rather than just in one objective.

Now, let us extend $R_{\text{Alg},a^*}(T)$ to the case where \( Alg \) selects a set of arms \( A^t \) at each round \( t = 1, 2, \ldots, T \) and we compare it to all PO arms. In this case, we say that \( Alg \) is \( R \)-regret, and denote it by \( R_{\text{Alg}}(T) \leq R \), if the following regret properties are met:

\subsection*{1. Coverage-Regret}  
For any PO arm \( a^* \in \mathcal{A}^* \) (or alternatively EPO arm \( a^* \in \mathcal{EA}^* \)), there exists an arm \( a^t \in A^t \) such that:

\begin{align}
 \mathbb{E} \left[ \sum_{t=1}^T r_{d}^t(a^*) - \sum_{t=1}^T r_d^t(a^t) \right] \leq R, \quad \forall d \in [D].
\end{align}
or
\begin{align}
 T \mu_d(a^*) - \mathbb{E} \left[ \sum_{t=1}^T r_d^t(a^t) \right] \leq R, \quad \forall d \in [D].
\end{align}

This property ensures that the algorithm can effectively approximate any PO (EPO) arm in terms of the reward vectors \( r_d \) across all dimensions \( d \) and over \( T \) iterations.

\subsection*{2. Cumulative Adjustment-Regret}  
The cumulative minimal adjustment required for the arms in \( A^t \) to weakly dominate some PO arm (or EPO arm) satisfies:
\[
\mathbb{E} \left[ \sum_{t=1}^T \sum_{a^t \in A^t} \min \{ \epsilon \geq 0 \mid \exists a^* \in \mathcal{A}^* : a^t + \epsilon \succeq a^* \} \right] \leq |\mathcal{A}^*| R.
\]
Alternatively, we can replace $\mathcal{A}^*$ with $\mathcal{EA}^*$.

This property bounds the cumulative regret associated with the adjustments needed for the arms selected by the algorithm to weakly dominate the optimal set \( \mathcal{A}^* \) (or \( \mathcal{EA}^* \)).

\subsection*{Regret Interpretation}
The coverage-regret property evaluates how well the algorithm approximates the PO arms in the Pareto front. Specifically, it measures the discrepancy (in terms of rewards across all dimensions \( d \)) between a PO arm \( a^* \in \mathcal{A}^* \) (or \( a^* \in \mathcal{EA}^* \)) and the nearest arm \( a^t \in A^t \). For each iteration \( t \), the algorithm selects an arm \( a^t \in A^t \) that minimizes:
\[
\epsilon = \max_{d \in [D]} \left( r_d^t(a^*) - r_d^t(a^t) \right).
\]
Thus, the selected arm \( a^t \) ensures that the adjusted (virtual) arm \( a^t + \epsilon \) weakly dominates \( a^* \). The total expected regret over \( T \) iterations is bounded by \( R \), indicating the quality of the coverage provided by the algorithm for the Pareto front.

As a complement, the cumulative adjustment-regret property evaluates the inverse relationship, quantifying the cumulative minimal adjustment required for the arms \( A^t \) to weakly dominate the optimal set \( \mathcal{A}^* \) (or \( \mathcal{EA}^* \)). While a larger set \( A^t \) reduces the coverage-regret, it increases the cumulative adjustment-regret due to the additional adjustments needed to align the selected arms with the optimal arms. This trade-off implies that \( A^t \) should ideally approximate the Pareto front with a size proportional to \( |\mathcal{A}^*| \), ensuring bounded cumulative adjustment-regret.

The interplay between coverage-regret and cumulative adjustment-regret highlights the balance required in selecting a representative set \( A^t \). Increasing the size of \( A^t \) improves coverage (reducing coverage-regret) but may result in higher cumulative adjustment-regret. To optimize performance, \( A^t \) should approximate \( \mathcal{A}^* \) as closely as possible while maintaining a manageable size, leading to a bounded cumulative adjustment-regret of \( |\mathcal{A}^*| R \).

For single-objective MAB problems (\( D = 1 \)), the above definitions reduce to the classical notion of regret. However, in multi-objective settings, the regret value in the coverage-regret property is non-negative when applied to \( \mathcal{EA}^* \) but may be negative for \( \mathcal{A}^* \) due to the non-convexity of the Pareto front. This behavior arises because the iterative decision-making process in MAB allows selecting subsets of arms that collectively achieve superior rewards compared to individual non-convex PO arms.

\section{A simple Multi-Objective Multi-Armed Bandits Algorithm}

The following introduces the first MO-MAB algorithm with a sublinear regret bound that satisfies both the coverage-regret and cumulative adjustment-regret properties, achieving a regret of 
\(
R = O\left( T^\frac{2}{3} (n \log T)^\frac{1}{3} \right).
\)
Additionally, we demonstrate that the algorithm’s outcomes converge to the PO set as \( T \to \infty \). The algorithm is initially explained and analyzed for PO arms, and we later extend the approach to efficiently handle EPO arms as well. 
This algorithm has two phases: In the exploration phase, each arm is explored by pulling it a fixed number of times ($T'$) to estimate the average reward for each arm. In the exploitation phase, the set of a minimum number of arms that cover all arms is identified ($B$) and pulled until iteration $T$. The pseudocode of the algorithm is represented below. The notation $a + 2r$ represents adding the scalar $2r$ to each dimension of arm $a$'s reward vector.

\begin{algorithm}[H]
\caption{A Simple MO-MAB Algorithm}
\label{SMOMABA}
\begin{algorithmic}[1]
\STATE \textbf{Input:} Number of arms $n$, time horizon $T$
\STATE Set $T' = \left( \frac{T}{n}\right)^\frac{2}{3} \left( 2 \log T \right)^\frac{1}{3}$
\STATE Pull each arm $T'$ times, and compute the average reward vector $\bar{\mu}(a)$ for all arms $a \in \mathcal{A}$
\STATE Set the confidence radius $r = \sqrt{\frac{2 \log T}{T'}}$
\STATE Remove all arms $a \in \mathcal{A}$ if there exists some $a' \in \mathcal{A}$ such that $a' \succeq a + 2r$
\FOR{any arm $a \in \mathcal{A}$}
    \STATE Compute list $Dom(a) = \{ a' \in \mathcal{A} : a + 2r \succeq a' \}$
\ENDFOR
\STATE Compute the minimum set of arms $B$ such that $\bigcup_{b \in B} Dom(b) = \mathcal{A}$
\FOR{$t = T' + 1$ \textbf{to} $T$}
    \STATE Pull all arms $b \in B$
\ENDFOR
\STATE \textbf{return} $B$
\end{algorithmic}
\end{algorithm}

\textbf{Complexity.} 
After \(T'\) rounds, Algorithm \ref{SMOMABA}, in Step 5, removes all the arms that are clearly dominated by some other arms. This step takes \(O(D n^2)\) time 
and is valid because, at this stage, all arms are \textit{clean} with a radius \(r\), as established in Theorem \ref{Coverage Theorem}. Importantly, no Pareto-optimal (PO) arms are discarded during this step. The primary purpose of Step 5 is to improve the algorithm's expected complexity. However, for worst-case analysis, this step can be omitted without affecting the theoretical regret guarantees of the algorithm.

Next, the algorithm computes \( \text{Dom}(a) \) for each arm \(a\), which represents the set of all arms \(a'\) weakly dominated by \(a + 2r\). Following this, the algorithm performs the minimum set cover computation of arms in Step 9. Specifically, it identifies the smallest number of improved arms, each enhanced by a factor of \(2r\), that weakly dominate all other arms. This set is called the \textit{minimum set covering} of arms.

The set cover problem is a well-known NP-hard problem \cite{Vazirani2003}, and in our scenario, it can be solved exactly with a time complexity of \(O(2^n \cdot n)\) in the worst case, assuming no arms are removed in Step 5. Consequently, the overall complexity of Algorithm \ref{SMOMABA} is \(O(2^n \cdot n)\). However, in the expected case, the number of PO arms is polylogarithmic of $n$ \cite{bentley1978average}. This means the average running time of Algorithm \ref{SMOMABA} is $O(2^{(\log n)^{D-1}} \cdot (\log n)^{D-1})$ even for computing the exact minimum set cover, e.g., $O(n \log n)$ time for $D=2$.

To address the worst-case time complexity of Algorithm \ref{SMOMABA}, we can employ an approximation algorithm that achieves an approximation ratio of \(O(\log n)\) \cite{Vazirani2003}. This allows us to reduce the complexity to polynomial time while obtaining a solution that is within a factor of \(O(\log n)\) of the optimal set cover size. Therefore, we have two options:
\begin{enumerate}
    \item Compute the optimal set cover in \(O(2^n \cdot n)\) time.
    \item Utilize an \(O(\log n)-\)approximation algorithm to obtain a set cover in polynomial time.
\end{enumerate}
It is important to note that the approximate solution guarantees that the size of the set cover will be at most \(O(\log n)\) times larger than the optimal solution. This trade-off between optimality and computational efficiency provides flexibility depending on the specific requirements and constraints of the application.

In the following, we demonstrate the impact of the two computational approaches on the complexity and regret of Algorithm \ref{SMOMABA}. The first option, which computes the exact minimum set cover, results in an exponential-time algorithm with a regret bound of \(R = O\left( T^\frac{2}{3} (n \log T)^\frac{1}{3} \right)\). The second option, employing the \(O(\log n)-\)approximation approach, yields a polynomial-time complexity algorithm with a regret bound of \(R = O\left( \log n \cdot T^\frac{2}{3} (n \log T)^\frac{1}{3} \right)\).


\begin{theorem}
\label{Coverage Theorem}
The coverage-regret of algorithm \ref{SMOMABA} is \(R = O\left( T^\frac{2}{3} (n\log T)^\frac{1}{3}\right)\).
\end{theorem}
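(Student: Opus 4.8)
The plan is to run an explore-then-commit analysis layered on top of a concentration (``clean'') event, making essential use of the fact that the coverage-regret definition only requires the \emph{existence} of comparison arms $a^t \in A^t$, so I may choose them in hindsight and separately in each round. Throughout, exploration ($t \le T'$) pulls all arms, so $A^t = \mathcal{A}$, while exploitation ($t > T'$) pulls the cover $B$, so $A^t = B$; I assume rewards lie in $[0,1]$ and (for the worst-case bound) that Step~5 is omitted so that $\bigcup_{b\in B} Dom(b) = \mathcal{A}$ covers every arm.

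First I would fix the clean event $\mathcal{E} = \{\,\forall a\in\mathcal{A},\,\forall d\in[D]:\ |\bar{\mu}_d(a)-\mu_d(a)|\le r\,\}$. Since each arm is pulled $T'$ times with bounded rewards, Hoeffding's inequality gives $\Pr[\,|\bar{\mu}_d(a)-\mu_d(a)|>r\,]\le 2\exp(-2T'r^2)$ for each $a,d$. With $r=\sqrt{2\log T/T'}$ the exponent equals $-4\log T$, so each coordinate fails with probability at most $2T^{-4}$, and a union bound over the $n$ arms and $D$ objectives yields $\Pr[\mathcal{E}^c]\le 2nDT^{-4}$.

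Next I would show that on $\mathcal{E}$ the cover $B$ approximates every Pareto-optimal arm in all objectives. For any $a^*\in\mathcal{A}^*$ the set-cover property provides some $b^*\in B$ with $a^*\in Dom(b^*)$, i.e.\ $\bar{\mu}_d(b^*)+2r\ge\bar{\mu}_d(a^*)$ for all $d$. Chaining the two clean-event bounds $\mu_d(a^*)\le\bar{\mu}_d(a^*)+r$ and $\mu_d(b^*)\ge\bar{\mu}_d(b^*)-r$ gives the key inequality $\mu_d(b^*)\ge\mu_d(a^*)-4r$ for every $d$. I then decompose the coverage-regret by choosing $a^t=a^*$ during exploration and $a^t=b^*$ during exploitation: the exploration part contributes $\mathbb{E}[\sum_{t\le T'} r_d^t(a^*)]=T'\mu_d(a^*)$ exactly, while on $\mathcal{E}$ the exploitation part contributes at least $(T-T')(\mu_d(a^*)-4r)$, and on $\mathcal{E}^c$ at least $0$. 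Taking expectations and using $\mu_d(a^*)\le 1$, the per-objective coverage-regret is bounded by $4r(T-T') + (T-T')\Pr[\mathcal{E}^c]\le 4rT + 2nDT^{-3}$.

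Finally, substituting $T'=(T/n)^{2/3}(2\log T)^{1/3}$ gives $r=(2n\log T/T)^{1/3}$, hence $4rT = 4(2n\log T)^{1/3}T^{2/3} = O\!\left(T^{2/3}(n\log T)^{1/3}\right)$, which dominates the $O(nDT^{-3})$ bad-event term and yields the claim. I expect the crux to be the coverage argument of the third paragraph: correctly converting the \emph{empirical}-mean domination built into the set cover into a \emph{true}-mean guarantee with the right $4r$ slack, and confirming on $\mathcal{E}$ that the comparison arm $b^*$ is always available (equivalently, that no PO arm is discarded in Step~5). The phase decomposition and the final parameter substitution are then routine.
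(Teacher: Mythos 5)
Your proposal is correct and follows essentially the same route as the paper's proof: an explore-then-commit decomposition under a Hoeffding clean event, choosing $a^*$ itself during exploration and a covering arm $b$ with $b+2r \succeq a^*$ during exploitation, chaining empirical domination with the concentration bounds to get the per-round $4r$ gap, and substituting $T'$ and $r$ to obtain $O\left(T^{2/3}(n\log T)^{1/3}\right)$. The only differences are cosmetic (your clean event is taken at the end of exploration rather than at all rounds, giving a slightly tighter union bound, and you make explicit the Step-5 caveat that the paper handles in its complexity discussion).
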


\begin{proof}

Let \(a^*\) be an arm in \(\mathcal{A^*}\). The goal is constructing a sequence of arms \( S = \{ a^t \}_{t=1}^T \) where $a^t$ is pulled by the algorithms in rounds $t=1,2,\ldots, T$ such that 

\[
 \mathbb{E} \left[\sum_{t=1}^T r_{d}^t(a^*) - \sum_{t=1}^T r_d^t(a^t) \right] \leq  O\left( T^\frac{2}{3} (n\log T)^\frac{1}{3}\right), \forall d \in [D].
\]

To construct $S$, we simply choose arm $a^*$ in the first $T'$ rounds, and for the next rounds ($t=T'+1$ to $t=T$) we choose $a^*$ if it is not removed from $B$ in Step 8. Otherwise, we choose an arm $b \in B$ such that $b+2r \succeq a^*$.

For such a sequence of arms, if \(a^* \in B\), the regret will be zero. Thus, let's analyze the case of $a^* \notin B$, that is, there exists an arm $b \in B$ such that \(b+2r \succeq a^*\).

We define the clean event \cite{slivkins2019introduction} as
\[
C = \left\{ \left| \bar{\mu}_d^{t}(a) - \mu_d(a) \right| \leq r, \, \forall a \in \mathcal{A}, \, \forall d \in [D], \, \forall t \in [T]\right\},
\]

where the confidence radius $r= \sqrt{\frac{2 \log T}{T'}}$.  
Define the \textit{unclean event} \( \overline{C} \) as the complement of the clean event \(C\). By the law of total expectation,
\[
\mathbb{E}[R(T)] = \mathbb{E}[R(T) | C] \, P[C] + \mathbb{E}[R(T) | \overline{C}] \, P[\overline{C}].
\]

Applying Hoeffding's inequality and union bounds, the deviation of the average reward from the true expected reward can be bound as: 
\[
P[\overline{C}] \leq  \frac{2 n D T}{T^4} \leq \frac{2}{T}.
\]

We used the fact that \(n < T\) and \(D < T\). So, the term \(\mathbb{E}[R(T) | \overline{C}] \, P[\overline{C}]\) is negligible. Thus, by the fact that \(P[C] \leq 1\), we can focus only on the clean events, particularly, \(\mathbb{E}[R(T) | C]\). 

Since arm $b+2r$ dominates arm $a^*$, and both arms $a$ and $b$ are clean, we can write

\[
\bar \mu_d(a^*) \leq \bar \mu_d(b) + 2 r, \forall d \in [D],
\]

and 
\[
\mu(a^*) \leq \bar \mu(a^*)+ r, \, \text{and} \, ~~ \mu(b) \leq \bar \mu(b) +r.
\]

Thus,
\[
\mu(a^*) - \mu(b) \leq 4 r = 4 \sqrt{\frac{2 \log T}{T'}}.
\]

Considering all \(T-T'\) rounds, results in

\[
E[R(T)|C] \leq (T-T') 4 \sqrt{\frac{2 \log T}{T'}} < 4T \sqrt{\frac{2 \log T}{T'}}.
\]

By choosing \(T'= \left( \frac{T}{n}\right)^\frac{2}{3} \left ( 2 \log T \right)^\frac{1}{3}\), we have

\[
E[R(T)|C] \leq 4 T^{\frac{2}{3}} (2 n \log T)^\frac{1}{3}.
\]

Finally, 
\begin{align*}
\mathbb{E}[R(T)] &= \mathbb{E}[R(T) | C] \, P[C] + \mathbb{E}[R(T) | \overline{C}] \, P[\overline{C}] \\
&\leq 4 T^{\frac{2}{3}} (2 n \log T)^\frac{1}{3} + (T-T') \frac{2}{T} \\
&= O\left( T^\frac{2}{3} (n\log T)^\frac{1}{3}\right)
\end{align*}

\end{proof}

\begin{theorem}
\label{Convergence Theorem}
The outcome of Algorithm \ref{SMOMABA} converges to the PO arms \( \mathcal{A}^* \) as \( T \to \infty \).
\end{theorem}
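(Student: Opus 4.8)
The plan is to show that, with probability tending to one as $T \to \infty$, the returned set $B$ coincides with the Pareto-optimal set $\mathcal{A}^*$ (up to arms sharing an identical mean vector), from which convergence of the algorithm's outcome follows: the exploitation phase pulls exactly the arms of $B$ and, since the exploration phase uses only $nT' = n^{1/3}T^{2/3}(2\log T)^{1/3} = o(T)$ rounds, a $1-o(1)$ fraction of all rounds is spent pulling arms of $B$. The engine of the argument is that the confidence radius shrinks to zero: substituting $T' = (T/n)^{2/3}(2\log T)^{1/3}$ into $r = \sqrt{2\log T/T'}$ gives $r = (2n\log T/T)^{1/3} \to 0$, while the Hoeffding and union-bound estimate from Theorem \ref{Coverage Theorem} gives $P[\overline{C}] \le 2/T \to 0$. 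Hence it suffices to work on the clean event $C$ and let $r \to 0$.

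First I would fix the true Pareto structure by letting $\Delta > 0$ be the smallest positive gap arising in the coordinatewise comparisons of the true mean vectors $\{\mu(a)\}_{a\in\mathcal{A}}$, assuming arms in general position, i.e.\ distinct mean vectors (the tie case is discussed at the end). On $C$ every estimate satisfies $|\bar\mu_d(a) - \mu_d(a)| \le r$, so the inflated domination test $\;\cdot + 2r \succeq \cdot\;$ used by the algorithm is only a bounded multiple of $r$ away from the exact Pareto test. I would then prove the two inclusions that pin $B$ to $\mathcal{A}^*$. For $\mathcal{A}^* \subseteq B$: a PO arm $a^*$ can be covered only by some $b$ with $b + 2r \succeq a^*$; unwinding $\bar\mu_d(b)+2r \ge \bar\mu_d(a^*)$ through the clean-event bounds yields $\mu_d(b) \ge \mu_d(a^*) - 4r$ for every $d$, i.e.\ $b$ almost dominates $a^*$. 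Since $a^*$ is non-dominated, once $4r < \Delta$ the only arm satisfying this is $a^*$ itself, so every PO arm lies in any feasible cover, whence $\mathcal{A}^* \subseteq B$. For the reverse size bound: every arm is weakly dominated by some PO arm, so $\mathcal{A}^*$ is itself a feasible cover (each $a$ lies in $\text{Dom}(a^*)$ for a dominating $a^*$), giving $|B| \le |\mathcal{A}^*|$ because $B$ is a minimum cover. Combining $\mathcal{A}^* \subseteq B$ with $|B| \le |\mathcal{A}^*|$ forces $B = \mathcal{A}^*$.

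I would also check that Step 5 never deletes a PO arm on $C$: a remover $a'$ of $a^*$ would satisfy $\bar\mu_d(a') \ge \bar\mu_d(a^*) + 2r$, which through the same slack computation gives $\mu_d(a') \ge \mu_d(a^*)$ for all $d$, forcing $a' \succ a^*$ when means are distinct and contradicting Pareto-optimality. Thus $\mathcal{A}^*$ is intact when the cover is computed. Finally I would assemble: for all $T$ large enough that $4r < \Delta$, the clean event $C$ implies $B = \mathcal{A}^*$, so $P[B = \mathcal{A}^*] \ge P[C] \ge 1 - 2/T \to 1$, and combined with $nT' = o(T)$ this gives convergence of the outcome to $\mathcal{A}^*$.

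The main obstacle is the separation/inflation interplay: I must guarantee that the $2r$-inflation does not let one PO arm spuriously "cover" a distinct neighbouring PO arm, which is precisely why I introduce the gap $\Delta$ and restrict the conclusion to horizons with $4r < \Delta$. The remaining delicate point is the treatment of ties—if several arms share an identical mean vector, the minimum cover retains only one representative, so the cleanest precise statement is that with probability $\to 1$ every arm of $B$ is Pareto-optimal and $B$ covers all of $\mathcal{A}^*$, sharpening to $B = \mathcal{A}^*$ under distinct means. A Borel–Cantelli argument along a subsequence of horizons, or a modestly larger confidence exponent, would upgrade convergence in probability to almost-sure convergence if desired.
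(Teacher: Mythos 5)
Your proposal is correct and follows essentially the same route as the paper's own proof: as \( T \to \infty \) the radius \( r = (2n\log T/T)^{1/3} \to 0 \), so no PO arm can be covered by any other arm (hence \( \mathcal{A}^* \subseteq B \)), while \( \mathcal{A}^* \) itself remains a feasible cover, and minimality of \( B \) then forces \( B = \mathcal{A}^* \). Your write-up is in fact considerably more rigorous than the paper's, which argues informally via ``\( b + 2r \approx b \)'': you quantify the separation with the gap \( \Delta \) and the threshold \( 4r < \Delta \), verify on the clean event that Step 5 never deletes a PO arm, invoke the feasible-cover bound \( |B| \leq |\mathcal{A}^*| \) (the content of Lemma \ref{lemmaSetCover}) explicitly, and—unlike the paper—flag that the literal statement \( B = \mathcal{A}^* \) can fail when two PO arms share identical mean vectors, in which case a minimum cover retains only one representative.
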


\begin{proof}
The outcome of Algorithm \ref{SMOMABA} is the set $B$ which remains unchanged after round \(T'\). To prove this convergence property, we need to show that as the time horizon \( T \) approaches infinity, the probability of $B = \mathcal{A^*}$ approaches 1. When $T$ approaches infinity, \(T'= \left( \frac{T}{n}\right)^\frac{2}{3} \left ( 2 \log T \right)^\frac{1}{3}\) approaches infinity and consequently the confidence radius $r= \sqrt{\frac{2 \log T}{T'}}$ approaches zero. Therefore, none of the PO solutions is dominated by some other arms $b+2r \approx b$ in the second phase of algorithm \ref{SMOMABA}. So, all the PO arms will appear in $B$. On the other hand, by approaching the confidence radius $r$ to zero, any non-PO arm at least must be dominated by some PO arm. As a result, set $B$ will be exactly all PO arms.

\end{proof}


\begin{lemma}
\label{lemmaSetCover}
    In case of the clean event $C$ happens, the optimal solution for the minimum set covering of arms, computed in Step 9 of Algorithm \ref{SMOMABA}, is bounded by \( |\mathcal{A^*}| \).
\end{lemma}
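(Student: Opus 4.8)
The plan is to show that the true Pareto-optimal set $\mathcal{A}^*$ is itself a feasible solution to the set-cover instance solved in Step 9 of Algorithm \ref{SMOMABA}; since $B$ is by definition a minimum-cardinality cover, this immediately yields $|B| \leq |\mathcal{A}^*|$. Everything hinges on converting the true-mean Pareto structure into the empirical domination relation used by $Dom(\cdot)$, which is exactly what the clean event $C$ lets us do.

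First I would establish the purely order-theoretic fact that, because $\mathcal{A}$ is finite and strict domination $\succ$ is a strict partial order, every arm $a \in \mathcal{A}$ is weakly dominated in the true means by some Pareto-optimal arm. If $a \notin \mathcal{A}^*$, follow a strictly increasing chain $a \prec b_1 \prec b_2 \prec \cdots$; finiteness forces it to terminate at a non-dominated arm $a^* \in \mathcal{A}^*$, and transitivity gives $\mu_d(a^*) \geq \mu_d(a)$ for all $d \in [D]$, i.e. $\mu(a^*) \succeq \mu(a)$. Next I would transport this domination to the empirical means: under $C$ we have $|\bar\mu_d(\cdot) - \mu_d(\cdot)| \leq r$ everywhere, so for every $d$, $\bar\mu_d(a^*) \geq \mu_d(a^*) - r \geq \mu_d(a) - r \geq \bar\mu_d(a) - 2r$, hence $\bar\mu(a^*) + 2r \succeq \bar\mu(a)$. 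By the definition of $Dom$ this is precisely $a \in Dom(a^*)$. Ranging over all arms shows $\bigcup_{a^* \in \mathcal{A}^*} Dom(a^*) = \mathcal{A}$, so $\mathcal{A}^*$ covers every arm.

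The remaining and most delicate point is that the arms of $\mathcal{A}^*$ are genuinely available as centers in Step 9, i.e. that Step 5 does not discard them (or, if it does, that a legitimate substitute survives). I would argue that if a PO arm $a^*$ is removed because some $a'$ satisfies $a' \succeq a^* + 2r$, then the same $2r$-to-$r$ conversion under $C$ forces $\mu_d(a') \geq \bar\mu_d(a') - r \geq \bar\mu_d(a^*) + r \geq \mu_d(a^*)$ for all $d$, so $\mu(a') \succeq \mu(a^*)$; this makes $a'$ itself Pareto-optimal (any strict dominator of $a'$ would, by transitivity, strictly dominate $a^*$) and moreover gives $Dom(a') \supseteq Dom(a^*)$. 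Iterating along the finite removal chain yields a surviving PO arm whose dominated set contains $Dom(a^*)$, so each removed PO center can be replaced without losing feasibility or increasing cardinality, and a feasible cover of size at most $|\mathcal{A}^*|$ always exists among the surviving arms.

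I expect the main obstacle to be exactly this bookkeeping around Step 5, together with the degenerate case of exact ties in the true means, rather than the core domination argument; the latter follows directly and cleanly from the clean event. Once the feasibility of a cover of size at most $|\mathcal{A}^*|$ is secured, optimality of the Step 9 solution gives $|B| \leq |\mathcal{A}^*|$ and completes the proof.
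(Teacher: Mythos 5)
Your proposal is correct and follows essentially the same route as the paper's proof: both arguments show that, under the clean event, every arm lies in \( Dom(a^*) \) for some true Pareto-optimal arm \( a^* \), so \( \mathcal{A}^* \) yields a feasible cover and the minimum set cover computed in Step 9 therefore has size at most \( |\mathcal{A}^*| \). The only difference is one of rigor: you explicitly handle the degenerate possibility that Step 5 removes a PO arm, by substituting the remover \( a' \) (itself PO, with \( Dom(a') \supseteq Dom(a^*) \)) and iterating along the removal chain, whereas the paper simply asserts that no PO arm is discarded before Step 9.
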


\begin{proof}
    Note that no PO arms are removed up to Step 9 in the algorithm. On the other hand, the union of PO arms dominates all other arms. When all arms are clean with radius \( r \), the union of improved (by $2r$) PO arms, also dominates all other arms, i.e., \( \bigcup_{a^* \in \mathcal{A^*}} Dom(a^*) = \mathcal{A} \).

    Therefore, the minimum set covering of arms has at least one solution of size \( |\mathcal{A^*}| \). Consequently, the optimal solution is bounded by \( |\mathcal{A^*}| \).
\end{proof}

Now let's prove the cumulative adjustment-regret bound for algorithm \ref{SMOMABA}.

\begin{theorem}
\label{cumulative adjustment-regret Theorem}
The cumulative adjustment-regret is hold for algorithm \ref{SMOMABA} with the regret \(R = O\left( T^\frac{2}{3} (n\log T)^\frac{1}{3}\right)\). 
\end{theorem}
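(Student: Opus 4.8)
The plan is to decompose the double sum defining the cumulative adjustment-regret according to the two phases of Algorithm~\ref{SMOMABA}, and to reuse the clean-event machinery of Theorem~\ref{Coverage Theorem} together with the cardinality bound of Lemma~\ref{lemmaSetCover}. Write \( \epsilon(a) = \min\{\epsilon \ge 0 \mid \exists a^* \in \mathcal{A}^* : a + \epsilon \succeq a^*\} \) for the per-arm adjustment. During the exploration phase (\( t \le T' \)) the set \( A^t \) is all of \( \mathcal{A} \), so this phase contributes \( T' \sum_{a \in \mathcal{A}} \epsilon(a) \); during the exploitation phase (\( t > T' \)) we have \( A^t = B \), contributing \( (T - T') \sum_{b \in B} \epsilon(b) \). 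I would condition on the clean event \( C \) throughout and absorb the \( \overline{C} \) term at the very end, exactly as in Theorem~\ref{Coverage Theorem}.

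For the exploration phase I would use only the crude bound \( \epsilon(a) \le 1 \) (rewards lie in \([0,1]\), so \( a + 1 \succeq a^* \) for any PO arm), giving \( \sum_{a \in \mathcal{A}} \epsilon(a) \le n \) and hence a contribution of at most \( T' n = T^{2/3}(2 n \log T)^{1/3} = O(R) \) under the same choice of \( T' \) used before. Since \( |\mathcal{A}^*| \ge 1 \), this already fits within the target \( |\mathcal{A}^*| R \).

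The core of the argument is the exploitation phase, for which I would establish the key lemma: under the clean event, every \( b \in B \) satisfies \( \epsilon(b) = O(r) \), i.e., there is a PO arm \( a^* \) with \( b + O(r) \succeq a^* \). Granting this, Lemma~\ref{lemmaSetCover} gives \( |B| \le |\mathcal{A}^*| \), so \( \sum_{b \in B}\epsilon(b) = O(|\mathcal{A}^*| r) \), and the phase contributes \( (T-T') \cdot O(|\mathcal{A}^*| r) = |\mathcal{A}^*| \cdot O\!\left(T \sqrt{2\log T / T'}\right) = O(|\mathcal{A}^*| R) \), reusing verbatim the computation \( 4T\sqrt{2\log T/T'} = O(T^{2/3}(n\log T)^{1/3}) \) from Theorem~\ref{Coverage Theorem}. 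Adding the two phases and the negligible \( \overline{C} \) contribution (bounded by \( T n \cdot P[\overline{C}] \le 2n \), which is lower order than \( R \) since \( n < T \)) yields the per-arm regret constant \( R = O(T^{2/3}(n\log T)^{1/3}) \).

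The main obstacle is the key lemma. The natural starting point is that every \( b \in B \) survived Step~5, so \( b + 2r \) is non-dominated among the estimated reward vectors; intuitively this forces \( b \) to sit within \( O(r) \) of the (estimated, hence true) Pareto front. The delicate point is the bookkeeping of confidence slacks: if \( a^* \) is a true PO arm dominating \( b \), the clean-event inequalities only give \( \mu_d(a^*) - \mu_d(b) \le 4r \) in the single coordinate where \( a^* \) fails to dominate \( b + 2r \), whereas \( \epsilon(b) \) requires the gap controlled in \emph{all} coordinates simultaneously. I would close this gap using the minimality of the cover \( B \): an arm lying more than \( O(r) \) below the front should be redundant, because the region it dominates (its \( \mathrm{Dom} \) list) is already dominated, up to the \( O(r) \) slack, by the covering arm of the PO arm sitting above it, contradicting that \( B \) is a minimum cover. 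Making this redundancy argument precise — reconciling the \( 2r \) slack hard-coded inside the algorithm with the \( 4r \) slack produced by the clean-event bounds — is the step I expect to require the most care.
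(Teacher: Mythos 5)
Your proposal is correct and takes essentially the same route as the paper's proof: the identical two-phase decomposition, the identical bound \(nT'\) on the exploration term via \(\epsilon(a)\le 1\), the identical key claim that under the clean event every \(b\in B\) lies within \(O(r)\) of some Pareto-optimal arm, proved by the same redundancy-in-the-minimum-cover contradiction, and the identical use of Lemma~\ref{lemmaSetCover} to turn \(|B|\le|\mathcal{A}^*|\) into the final \(|\mathcal{A}^*|\,O\bigl(T^{2/3}(n\log T)^{1/3}\bigr)\) bound. The slack-bookkeeping difficulty you flag is real but is not resolved any more tightly in the paper: there, one takes \(a^* = \operatorname*{argmin}_{a\in\mathcal{A}^*,\ a\succeq b}\max_{d\in[D]}\bigl(\mu_d(a)-\mu_d(b)\bigr)\), assumes some coordinate gap exceeds \(4r\), and invokes transitivity to claim \(Dom(b)\subset Dom(a^*)\) (so that either \(a^*\in B\) or its covering arm \(b'\in B\) renders \(b\) redundant), a step that glosses over exactly the \(2r\)-versus-\(4r\) accumulation you identify.
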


\begin{proof}
To prove this theorem, we need to show that

\begin{align*}
\mathbb{E} \left[ \sum_{t=1}^T \sum_{a^t \in A^t} \min \left\{ \epsilon \geq 0 \mid \exists a^* \in \mathcal{A}^* : a^t + \epsilon \succeq a^* \right\} \right] \\
\leq |\mathcal{A}^*| O \left( T^{\frac{2}{3}} \left( n \log T \right)^{\frac{1}{3}} \right).
\end{align*}

Similar to Theorem \ref{Coverage Theorem}, the probability of the unclean event \( P[\overline{C}] \) is bounded by \( \leq \frac{2}{T} \). Therefore, we assume only the case where all arms are clean. For this case, we decompose the above right-hand term into two components to analyze them separately and derive the desired bounds.

\[
\text{Term 1} = \mathbb{E} \left[ \sum_{t=1}^{T'} \sum_{a^t \in A^t} \min \{ \epsilon \geq 0 \mid \exists a^* \in \mathcal{A}^* : a^t + \epsilon \succeq a^* \} \right]
\]

and

\[
\text{Term 2} = \mathbb{E} \left[ \sum_{t=T'+1}^T \sum_{a^t \in A^t} \min \{ \epsilon \geq 0 \mid \exists a^* \in \mathcal{A}^* : a^t + \epsilon \succeq a^* \} \right]
\]

Based on the behavior of Algorithm \ref{SMOMABA},

\[
\text{Term 1} = \sum_{t=1}^{T'} \sum_{a \in \mathcal{A}} \min \{ \epsilon \geq 0 \mid \exists a^* \in \mathcal{A}^* : a^t + \epsilon \succeq a^* \}
\]
That is pulling all arms in \(\mathcal{A}\). So, 
\[
\text{Term 1} \leq \sum_{t=1}^{T'} \sum_{a \in \mathcal{A}} 1 \leq n T' = O\left( T^\frac{2}{3} (n\log T)^\frac{1}{3}\right).
\]

We now proceed to bound 
\[
\text{Term 2} = \mathbb{E} \left[ \sum_{t=T'+1}^T \sum_{b \in B} \min \{ \epsilon \geq 0 \mid \exists a^* \in \mathcal{A}^* : b + \epsilon \succeq a^* \} \right],
\]
where \( B \) is the minimum set of arms such that every arm \( a \in \mathcal{A} \), including PO arms, is weakly dominated by \( b + 2r \) for some \( b \in B \). If \( b \) is a PO arm, then \( \epsilon = 0 \). Otherwise, we show that under these conditions, there always exists a PO arm \( a^* \) such that \( a^* \succeq b \) and \( \mu_d(a^*) - \mu_d(b) \leq 4r \) for all \( d \in [D] \). Specifically, this \( a^* \) can be expressed as:
\[
a^* = \operatorname*{argmin}_{a \in \mathcal{A}^* \text{ and } a \succeq b} \quad \max_{k \in [D]} \mu_d(a) - \mu_d(b).
\]

Assume, for contradiction, that there exists some \( k \in [D] \) such that \( \mu_d(a^*) - \mu_d(b) > 4r \). Since both \( b \) and \( a^* \) are clean ($\forall d \in [D]: |\mu_d(a^*) - \bar \mu_d(a^*)| \leq r$ and \(|\mu_d(b) - \bar \mu_d(b)| \leq r\)), this implies \( b \in Dom(a^*) \) but \( a^* \notin Dom(b) \). However, as the weak domination relation (\( \succeq \)) is transitive, it follows that \( Dom(b) \subset Dom(a^*) \).

Given that \( B \) is the minimum set covering all arms:
\begin{itemize}
    \item If \( a^* \in B \), then \( b \notin B \), as including both would violate \( B \) is the minimum set cover.
    \item If \( a^* \notin B \), there must exist some clean arm \( b' \in B \) such that \( a^* \in Dom(b') \). By transitivity, this implies \( b \in Dom(b') \), again contradicting \( B \) is the minimum set cover.
\end{itemize}

Thus, both cases lead to contradictions, proving that \( \mu_d(a^*) - \mu_d(b) \leq 4r \) for all \( d \in [D] \).

So, we can bound Term 2 as:
\[
\text{Term 2} \leq \mathbb{E} \left[ \sum_{t=T'+1}^T \sum_{b \in B} 4r \right]
= (T - T') |B| 4 \sqrt{\frac{2 \log T}{T'}}.
\]

This simplifies further to:
\[
\text{Term 2} < 4T |B| \sqrt{\frac{2 \log T}{T'}}.
\]

Based on lemma \ref{lemmaSetCover}, $|B| \leq |\mathcal{A^*}|$. So, by replacing \( T'= \left( \frac{T}{n}\right)^\frac{2}{3} \left ( 2 \log T \right)^\frac{1}{3} \), we obtain:
\[
\text{Term 2} \leq  O\left( |\mathcal{A^*}| T^\frac{2}{3} (n\log T)^\frac{1}{3}\right) .
\]

Finally,

\[
\text{Term 1 + Term 2} \leq |\mathcal{A^*}| O\left(  T^\frac{2}{3} (n\log T)^\frac{1}{3}\right) .
\]

While the main proof concludes here, we note that the polynomial-time approximation for the minimum covering set satisfies $|B| \leq \log n \cdot |\mathcal{A}^*|$. Consequently, the polynomial-time algorithm achieves the regret bound of $O\left( \log n \cdot T^{\frac{2}{3}} (n \log T)^{\frac{1}{3}} \right)$ while maintaining correctness.

\end{proof}

\begin{theorem}
\label{Efficient Pareto Theorem}
In Algorithm \ref{SMOMABA}, if after computing the minimum arm covering set \( B \), the non-efficient arms are removed, Theorem \ref{Coverage Theorem}, Theorem \ref{Convergence Theorem}, and Theorem \ref{cumulative adjustment-regret Theorem} remain valid for the efficient PO arms, \(\mathcal{EA^*}\).
\end{theorem}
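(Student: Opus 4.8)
The plan is to prove Theorem \ref{Efficient Pareto Theorem} by showing that restricting attention to the EPO set $\mathcal{EA}^*$ and removing non-efficient arms from $B$ preserves all three earlier guarantees. The key structural fact I would lean on is established earlier in the paper: an arm lies in $\mathcal{EA}^*$ iff its reward vector is in convex position on the Pareto front, equivalently iff it is \emph{not} weakly dominated by any convex combination $\sum_i \alpha_i \mu(a_i)$ of other PO arms with $\sum_i \alpha_i = 1$. Crucially, since $\mathcal{EA}^* \subseteq \mathcal{A}^*$, every EPO arm is in particular a PO arm, so the coverage and adjustment bounds proved for $\mathcal{A}^*$ should transfer with the \emph{same} regret rate.

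First I would handle coverage (Theorem \ref{Coverage Theorem}). Fix $a^* \in \mathcal{EA}^*$. Because $\mathcal{EA}^* \subseteq \mathcal{A}^*$ and no PO arm is discarded before Step 9 (Lemma \ref{lemmaSetCover}), the same sequence-construction argument applies: either $a^*$ survives into the (now EPO-restricted) cover $B$, giving zero regret, or there is some retained arm $b \in B$ with $b + 2r \succeq a^*$, and the clean-event chain $\mu_d(a^*) - \mu_d(b) \leq 4r$ goes through verbatim. The only new point to verify is that when non-efficient arms are stripped from $B$, every EPO arm $a^*$ is still covered by a \emph{retained} arm within $4r$. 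I would argue this by transitivity of $\succeq$ exactly as in Theorem \ref{cumulative adjustment-regret Theorem}: if the covering arm $b$ were removed as non-efficient, the convex-position characterization guarantees a replacement efficient arm (or convex combination realizable by adjusting pull counts $n_a$) dominating $a^*$ within the same radius.

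Next, for convergence (Theorem \ref{Convergence Theorem}) I would note that as $T \to \infty$ the radius $r \to 0$, so $B \to \mathcal{A}^*$ by the earlier argument; then the linear-programming feasibility test defining $\mathcal{EA}^*$ (the $n^*-1$ variable, $D$ constraint program described after the EPO definition) becomes exact in the limit, so removing non-efficient arms yields precisely $\mathcal{EA}^*$. For cumulative adjustment-regret (Theorem \ref{cumulative adjustment-regret Theorem}), Term 1 (the exploration phase) is unchanged since all $n$ arms are still pulled for $T'$ rounds. For Term 2, the bound $|B| \leq |\mathcal{A}^*|$ from Lemma \ref{lemmaSetCover} only improves after removing non-efficient arms, and the per-arm adjustment $\epsilon \leq 4r$ is preserved because each retained $b$ still weakly dominates (up to $2r$) the arms it covers; replacing $\mathcal{A}^*$ by $\mathcal{EA}^*$ in the $\min$ over $a^*$ can only require a $4r$-adjustment that the convex-position property still supplies.

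\textbf{The main obstacle} I anticipate is the coverage step under the EPO restriction: removing non-efficient arms from $B$ could in principle orphan some arm that was only covered by a discarded non-efficient arm, and one must confirm that the adjustment needed to dominate the \emph{EPO} target (rather than an arbitrary PO target) is still $O(r)$. This is precisely where I would invoke the convex-position characterization of $\mathcal{EA}^*$ together with the transitivity-of-domination contradiction argument from Theorem \ref{cumulative adjustment-regret Theorem}: any PO arm that drops out of $\mathcal{EA}^*$ is weakly dominated by a convex combination of efficient arms, which the algorithm can realize by an appropriate split of pull counts, so the coverage radius never degrades beyond $4r$. I would make sure to state explicitly that, because the regret in the coverage property may be negative for non-convex PO arms but is non-negative on $\mathcal{EA}^*$, the EPO case is in fact the cleaner one, and no rate is lost.
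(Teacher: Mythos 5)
Your proposal is correct and follows essentially the same route as the paper: the paper likewise replaces any covering arm removed as non-efficient by an ``artificial arm'' $b_\alpha = \sum_i \alpha_i a_i$ (your convex combination realized by adjusting pull counts), notes that the expected-value form of the regret definitions lets this randomized/proportional pulling stand in for a single arm in Theorems \ref{Coverage Theorem} and \ref{cumulative adjustment-regret Theorem}, and handles Theorem \ref{Convergence Theorem} by the same $r \to 0$ limit argument. The two arguments share the same level of informality on the remaining subtle points (e.g., that the dominating combination consists of retained efficient arms), so no rigor is lost or gained relative to the paper.
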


\begin{proof}
An arm \( a \in B \) is identified and removed as a non-efficient arm if a linear combination of other arms in \( B \) weakly dominates \( a \). Specifically, in this case, there exists a subset \( B' \subseteq B \) with \( m \leq n\) arms and coefficients \(\alpha = (\alpha_1, \alpha_2, \dots, \alpha_m)\), such that \(\sum_{i=1}^m \alpha_i = 1\), where the \textit{artificial arm} \( b_\alpha = \sum_{i=1}^m \alpha_i a_i \) dominates \( a \). The term "artificial" reflects that no individual arm has rewards identical to \( b_\alpha \), but since Theorem \ref{Coverage Theorem}, and Theorem \ref{cumulative adjustment-regret Theorem} discuss the expected value for the defined regret, the rewards of \( b_\alpha \) can be approximated for sufficiently large \( T \) by selecting arm \( a_i \in B' \) with probability \( \alpha_i \).
Replacing \( b \) with such an artificial arm \( b_\alpha \) in Theorems \ref{Coverage Theorem} and \ref{cumulative adjustment-regret Theorem} confirms their validity for the efficient PO arms, \(\mathcal{EA^*}\). 
For Theorem \ref{Convergence Theorem}, since \( r = \sqrt{\frac{2 \log T}{T'}} \) converges to zero as \( T \to \infty \), removing non-efficient arms from \( B \) ensures only efficient PO arms remain in \(\mathcal{EA^*}\).
\end{proof}

Before concluding the theoretical results, we elucidate an additional property of the proposed MO-MAB algorithm, which pertains to the concept of \textit{diversity} in classical multi-objective optimization problems (MOPs) \cite{coello2007evolutionary}, and its implications for computing regret in MAB approaches that compare the outcome with only one best arm. While our analysis focuses on all PO arms, the conclusions are equally applicable to EPO arms.

Diversity, a critical secondary objective in MOPs, concerns the challenge of identifying a representative subset of PO solutions that is as diverse as possible within the objective space. Since multi-objective optimization algorithms typically generate a limited subset of solutions from a potentially vast set of PO solutions, ensuring adequate diversity in this subset is crucial. The proposed Algorithm \ref{SMOMABA} addresses this by selecting the minimal set of covering arms, which are improved by a radius \( r = \sqrt{\frac{2 \log T}{T'}} \). This implies that if multiple PO arms exist within a specific region of the Pareto front, the algorithm prioritizes a minimum subset of these arms capable of dominating all others in the same region (when improved by a factor of \( 2r \)). 

Consider a (maximal) diverse set of Pareto Optimal arms, denoted as \( \text{DPO} \), characterized by a radius \( r = \sqrt{\frac{2 \log T}{T'}} \). Define a virtual arm \( b^* \) whose reward value represents the mean reward of the arms in \( \text{DPO} \). Specifically, \( b^* \) is given by: 

\[
r_d(b^*) = \frac{1}{|\text{DPO}|} \sum_{a^* \in \text{DPO}} r_d(a^*), \quad \forall d \in [D].
\]

The arm \( b^* \) can be interpreted as the expected reward achievable by a decision-maker who restricts his choices to a diverse subset of Pareto-optimal arms within a specified radius \( r \). This approach is an optimal strategy for efficiently covering all regions of the Pareto front. Another interpretation of this best arm $b^*$ is the cumulative reward if the decision-maker randomly chooses one Pareto-optimal arm in DPO in each round.
Importantly, this definition extends the concept of selecting the "best" arm from single-objective multi-armed bandit (MAB) problems to multi-objective contexts. Unlike a simple weighted linear combination of objectives, this formulation focuses on the average reward across the diverse set of PO arms rather than directly aggregating the objective values.

Now, consider a version of Algorithm \ref{SMOMABA} that after $T'$ round, and computing the minimum covering arms $B$, only one arm is randomly selected from $B$ and pulled for rounds $t=T'+1, T'+2, \dots, T$. Thus, the number of all pulls will be $nT'+(T-T')$. In the following, we show that this version of the algorithm is an \(R = O\left( T^\frac{2}{3} (n\log T)^\frac{1}{3}\right)\) comparing to $b^*$ introduced above.

\begin{theorem}
\label{Theorem b_star}
  The regret of the single arm pulling version of Algorithm \ref{SMOMABA} compared to the average best arm $b^*$ is \(R = O\left( T^\frac{2}{3} (n\log T)^\frac{1}{3}\right)\).
\end{theorem}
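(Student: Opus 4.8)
The plan is to reuse the clean-event machinery of Theorem \ref{Coverage Theorem}. Writing $a^t$ for the arm pulled at round $t$, I would bound, for each dimension $d \in [D]$, the quantity $T\mu_d(b^*) - \mathbb{E}\left[\sum_{t=1}^T r_d^t(a^t)\right]$, where the expectation is taken over both the reward noise and the extra randomness of selecting a single arm uniformly from $B$ in the exploitation phase. As before, I would split on the clean event $C$: Hoeffding's inequality plus a union bound give $P[\overline{C}] \le 2/T$, so the unclean contribution is negligible and it suffices to control $\mathbb{E}[R(T)\mid C]$.

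On the clean event I would treat the two phases separately. In the exploration phase ($t \le T'$) the per-round reward gap against $b^*$ is $O(1)$ since all rewards are bounded, so this phase contributes at most $O(T') = O\left(T^{2/3}(n\log T)^{1/3}\right)$, already within the target bound. For the exploitation phase ($T' < t \le T$), taking expectation over the uniform choice of $b \in B$, the expected per-round reward in dimension $d$ equals $\bar\mu_d(B) := \frac{1}{|B|}\sum_{b\in B}\mu_d(b)$. The whole theorem therefore reduces to proving the per-round bound $\mu_d(b^*) - \bar\mu_d(B) \le O(r)$ for every $d$, with $r=\sqrt{2\log T/T'}$.

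The heart of the argument, and the step I expect to be the main obstacle, is relating the average over $B$ to the average $\mu_d(b^*) = \frac{1}{|\text{DPO}|}\sum_{a^*\in \text{DPO}}\mu_d(a^*)$ defining $b^*$. I would invoke two facts established earlier: first, on the clean event $\bigcup_{b\in B}(b+2r)$ weakly dominates every arm, in particular every arm of $\text{DPO}\subseteq\mathcal{A}^*$, so for each $a^*\in\text{DPO}$ there is some $b\in B$ with $\mu_d(b)\ge \mu_d(a^*)-2r$ for all $d$; second, by the contradiction argument of Theorem \ref{cumulative adjustment-regret Theorem}, every $b\in B$ lies within $4r$ of a PO arm, and since $\text{DPO}$ is a maximal $r$-diverse (hence $r$-covering) representation of the front, that PO arm is within $O(r)$ of a $\text{DPO}$ arm. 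The delicate claim is that these relations can be organized into a bijection $\phi\colon \text{DPO}\to B$ with $\|\mu(\phi(a^*))-\mu(a^*)\|_\infty = O(r)$, which rests on both sets being minimal resolution-$r$ representatives of the same front, so $|B|=|\text{DPO}|$. Granting the bijection, $\bar\mu_d(B)=\frac{1}{|\text{DPO}|}\sum_{a^*}\mu_d(\phi(a^*))\ge \frac{1}{|\text{DPO}|}\sum_{a^*}(\mu_d(a^*)-2r)=\mu_d(b^*)-2r$, i.e. $\mu_d(b^*)-\bar\mu_d(B)\le 2r$, which is exactly the per-round bound.

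Finally I would assemble the pieces: the exploitation phase contributes at most $(T-T')\cdot 2r < 2T\sqrt{2\log T/T'}$, and substituting $T'=\left(\frac{T}{n}\right)^{2/3}(2\log T)^{1/3}$ turns this into $O\left(T^{2/3}(n\log T)^{1/3}\right)$. Adding the exploration contribution and the negligible $P[\overline{C}]$ term keeps the total at $R=O\left(T^{2/3}(n\log T)^{1/3}\right)$, as claimed. The only fragile point is the matching claim $|B|=|\text{DPO}|$; should it fail to hold exactly, one can still recover an $O(r)$ per-dimension gap by charging each $\text{DPO}$ arm to a dominating $b\in B$ and bounding the multiplicity of this charging by $O(1)$, which is enough for the stated order.
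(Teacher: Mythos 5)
Your proposal follows essentially the same route as the paper's own proof: condition on the clean event (with the same $P[\overline{C}]\le 2/T$ bound), bound the exploration phase crudely by its length, reduce the exploitation phase to the per-round claim that the uniform average over $B$ is within $O(r)$ of $b^*$ in every dimension, and then substitute $T'$ and $r$ to get $O\bigl(T^{2/3}(n\log T)^{1/3}\bigr)$. Moreover, the step you single out as the heart of the matter --- organizing the $O(r)$-domination relations between $\text{DPO}$ and $B$ into a bijection with $|B|=|\text{DPO}|$ --- is exactly the step the paper does not prove either: its proof simply asserts that ``under the clean event condition, the arms in $B$ are the minimum covering DPO set,'' and from ``each $b+2r$ dominates some Pareto arm in DPO'' jumps directly to ``the expected difference with $r_d(b^*)$ does not exceed $O(2r)$.'' That inference presupposes precisely the balanced correspondence you name. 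So on the main route your write-up is at parity with, and more candid than, the published argument. (One accounting slip: the exploration phase of this variant consists of $nT'$ pulls, not $T'$, since every arm is pulled $T'$ times; its contribution is $nT' = T^{2/3}(2n\log T)^{1/3}$, which still fits the target bound, as in the paper's Term 1.)

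The genuine problem is your fallback, which you offer as the safety net in case $|B|=|\text{DPO}|$ fails. Charging each $\text{DPO}$ arm to a dominating $b\in B$ with multiplicity $O(1)$ only yields $\mu_d(b^*) \le \frac{1}{|\text{DPO}|}\sum_{b\in B} n_b\,\mu_d(b) + O(r)$, where $n_b$ is the number of arms charged to $b$. This is a \emph{weighted} average over $B$, whereas uniform random pulling achieves the \emph{uniform} average $\bar\mu_d(B)$; the two can differ by $\Omega(1)$ whenever the multiplicities are skewed, and bounded multiplicity does nothing to prevent skew. Concretely, take a front with a nearly flat cluster of many PO arms, pairwise more than $r$ apart but all weakly dominated by a single arm $b_1$ once improved by $2r$, plus one isolated PO arm $b_2$ far away that only covers itself. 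Then $B=\{b_1,b_2\}$, while $\text{DPO}$ (read as a maximal $r$-separated subset of the front) contains the whole cluster, so $b^*$ is essentially the cluster average; uniform selection from $B$ spends half its rounds on $b_2$, and in the cluster's strong dimension the per-round gap to $b^*$ is of constant order, not $O(r)$. So the charging argument cannot replace the bijection: what is needed is a \emph{balanced} correspondence, and whether one exists turns on the precise definition of ``maximal diverse set with radius $r$'' (e.g., maximal pairwise non-$2r$-dominating subset versus maximal $r$-packing), which neither you nor the paper pins down. You should either delete the fallback or state the bijection as an explicit hypothesis on $\text{DPO}$; as written, the last sentence of your proof claims more than the argument delivers.
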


\begin{proof}
    The proof is similar to the poof of Theorem \ref{cumulative adjustment-regret Theorem}. Under the clean events, since all the arms are pulled in the first $T'$ rounds, then the regret value in the first $T'$ iteration of the algorithm is at most $nT'$ in total (as Term 1). For the second Term, assume $b^t$ is the random arm selected from the computed minimum covering arms $B$, so, we have

    \[
\text{Term 2} = \mathbb{E} \left[ \sum_{t=T'+1}^T \min \{ \epsilon \geq 0 : b^t + \epsilon \succeq b^* \} \right].
\]

As discussed, under the clean event condition, the arms in $B$ are the minimum covering DPO set. That means, improving each arm $b\in B$ with the radius $2r$ will dominate some Pareto arm in DPO. So, selecting uniformly random arms from $B$ in rounds $t=T'+1, T'+2, \dots, T$ results in the expected sum of difference with $r_d(b^*)$ does not exceed $O(2r)$ for all $d \in [D]$. Therefore, the total regret can be written as

\[
\mathbb{E}[T] \leq n T' + O( (T-T') 2 r) \leq n T' + O( T 2 r),
\]

where $c$ is a constant. Replacing \( T'= \left( \frac{T}{n}\right)^\frac{2}{3} \left ( 2 \log T \right)^\frac{1}{3} \) and \( r = \sqrt{\frac{2 \log T}{T'}} \) results in
\[
\mathbb{E}[T] \leq  O\left( T^\frac{2}{3} (n\log T)^\frac{1}{3}\right).
\]
  
\end{proof}

\textbf{Proposition.} When \( T \to \infty \), so \( r \to 0 \), and DPD will equal all the Pareto-optimal arms. Thus, Theorem \ref{Theorem b_star} holds for the definition of the \textit{best} arm, which corresponds to the mean reward of the Pareto-optimal arms.

\section{Experimental Results}
While our primary contributions lie in the formal analysis of the proposed multi‐objective bandit framework and its regret guarantees, we also provide an empirical assessment to illustrate the algorithm’s practical behavior.  All code was written in Python and executed on a laptop with an Intel Core i7-1165G7 CPU (2.80 GHz) and 16 GB of RAM.  

For each problem size \((n,D,T)\), we generate a single set of “true” mean vectors \(\{\mu_{a,d}\}_{a=1,\dots,n}^{d=1,\dots,D}\) by sampling uniformly from \([0,1]\).  During the rounds, we sample stochastic outcomes
\[
R_{a,d}^t\sim\mathrm{Bernoulli}(\mu_{a,d}),\quad
a=1,\dots,n, \; d=1,\dots,D, \; t=1,\dots,T
\]
and at round $t=T'$, we run both the exact and greedy set‐cover subroutines.  This ensures that both methods operate on identical empirical data. 
As explained, the exact set cover algorithm exhaustively searches all subsets of candidate sets to be set $B$ in algorithm \ref{SMOMABA} as the smallest set of arms whose union covers all arms. It explores combinations of increasing size, ensuring optimality but requiring exponential time. The greedy approach iteratively selects the set that covers the largest number of uncovered arms until full coverage is achieved. It runs in polynomial time and guarantees optimality with a logarithmic approximation \cite{Vazirani2003}.

We conduct experiments for \(n \in \{20, 50, 100\}\) and \(D \in \{2, 3, 5\}\), and we choose the total horizon \(T\) large enough so that the confidence radius is approximately 0.02.
In theory, \(r = \sqrt{2\ln T / T'}\) ensures that adding \(2r\) to each empirical mean yields valid dominance relations, but it can easily produce \(r\) values near unity—collapsing the cover set, $B$, to size one. To achieve a practically meaningful margin (e.g.\ \(r\approx0.02\)), \(T'\) (and thus \(T\)) must grow by orders of magnitude. Consequently, one typically calibrates \(r\) below its theoretical bound, accepting a slight relaxation of the guarantee in favor of a non-degenerate arm selection. For example, choosing $T=10^8$ yields $T'\approx 10^5$ and $r\approx0.02$. 
Each configuration is repeated independently 10 times to reduce sampling noise.  We measure (i) the average true PO size, (ii) the average cover size \(\lvert B\rvert\) in the algorithm, and (iii) the average wall‐clock time for each case approximation and exact approach to compute the minimum set cover, reporting these values in Table~\ref{result_table}.

\begin{table}[h!]
    \centering
    \caption{Average Pareto‐optimal set size, cover size \(|B|\), and runtime for exact and greedy set‐cover subroutines. All the instances are reported for $T=10^8$ rounds.}
    \label{result_table}
    \begin{tabular}{cc|c|cc|cc}
        \hline
        \multirow{2}{*}{$n$} & \multirow{2}{*}{$D$} & \multirow{2}{*}{Avg.\ true PO} 
            & \multicolumn{2}{c|}{Exact set cover} 
            & \multicolumn{2}{c}{Greedy set cover} \\
        & & 
            & \(|B|\) & Time (s) 
            & \(|B|\) & Time (s) \\
        \hline
        20  & 2  &  4.3 & 3.2 & 0.0 & 3.2 & 0.0 \\
        20  & 3  &  7.2 & 4.8 & 0.1 & 4.9 & 0.0 \\
        20  & 5 &  14.2 & 12.6 & 0.6 & 12.7 & 0.0 \\
        \hline
        50  & 2  &  4.8 & 2.6 & 0.0 & 2.6 & 0.0 \\
        50  & 3  &  10.5 & 6.0 & 0.2 & 6.1 & 0.0 \\
        50  & 5 &  18.2 & 11.2 & 6.1 & 11.5 & 0.1 \\
        \hline
        100 & 2  &  5.6 & 2.2 & 0.0 & 2.2 & 0.0 \\
        100 & 3  &  14.1 & 7.2 & 4.95 & 7.5 & 0.0 \\
        100 & 5 &  26.7 & 15.1 & 29.0 & 15.7 & 0.2 \\
        \hline
    \end{tabular}
\end{table}

Table~\ref{result_table} shows that as the number of arms \(n\) or the number of objectives \(D\) increases, the average Pareto‐optimal set grows substantially (from roughly 4–5 arms at \(D=2\) to over 27 arms at \(n=100, D=5\)). Both exact and greedy set‐cover methods consistently produce cover sets \(\lvert B\rvert\) that are somewhat smaller than the true Pareto front, demonstrating effective reduction of candidate arms while preserving coverage. The greedy heuristic typically selects a cover of size within one arm of the exact solver (e.g., 7.5 vs. 7.2 at \(n=100,D=3\)), at a fraction of the runtime (orders of milliseconds versus seconds when \(D\) and \(n\) grow). These results confirm that the greedy approximation achieves near‐optimal cover sizes in practice, with dramatic savings in computation time as problem dimensions scale.

Another notable feature of the algorithm is its economy in the exploitation phase: it identifies a small arm‐set \(B\) to pull after exploration. For instance, with \(n=100\), \(D=5\), and \(T=10^8\), the procedure dedicates only \(T' = 97\,295\) rounds to exploration, then confines all remaining pulls to \(\lvert B\rvert = 15\) arms—out of 27 true Pareto‐optimal candidates. To illustrate this behavior for \(D=2\), Figure~\ref{fig:D2} presents a representative trial (for \(n=20\), 50, and 100), plotting true means (black circles), PO arms (red stars), and the final cover set \(B\) (blue circles). For example, in the right panel (\(n=100\)), although 8 of the 100 arms are Pareto‐optimal, the algorithm selects only 3 for pulling in the exploitation phase. By adding \(2r\) to the empirical means of these three arms, the reduced cover set suffices to dominate all the other arms in the instance.

\begin{figure}[h!]
  \centering
  \begin{subfigure}[b]{0.32\textwidth}
    \centering
    \includegraphics[width=\textwidth]{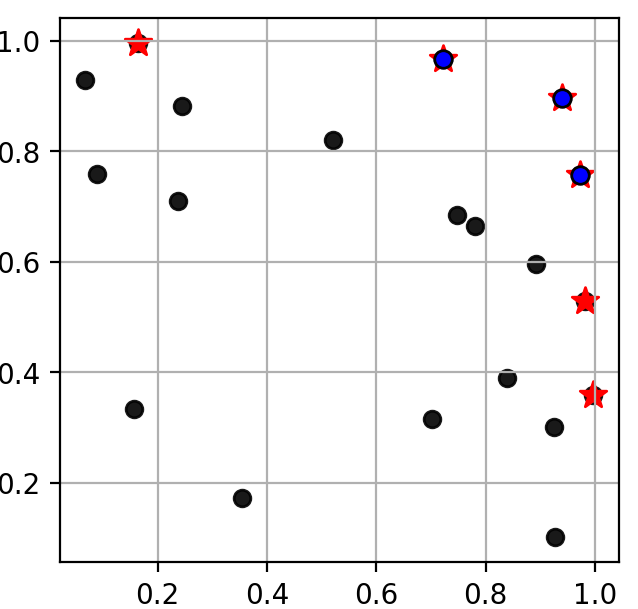}
    \caption{$n=20$}
    \label{fig:n20_d2}
  \end{subfigure}
  \hfill
  \begin{subfigure}[b]{0.32\textwidth}
    \centering
    \includegraphics[width=\textwidth]{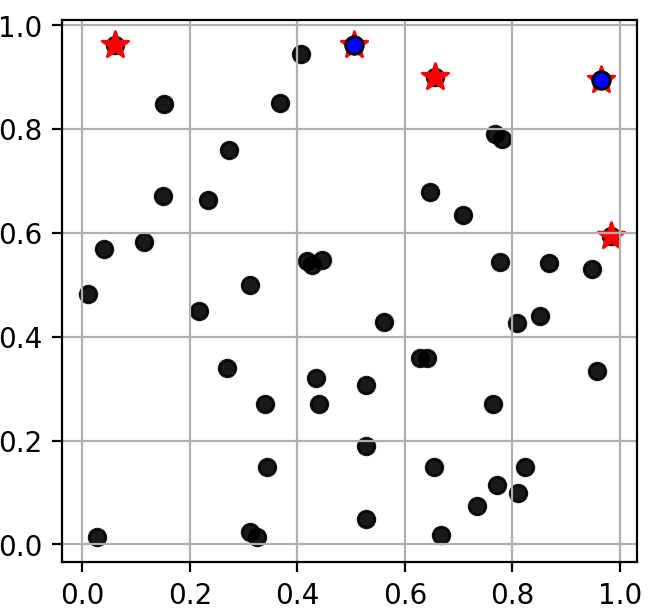}
    \caption{$n=50$}
    \label{fig:n50_d2}
  \end{subfigure}
  \hfill
  \begin{subfigure}[b]{0.32\textwidth}
    \centering
    \includegraphics[width=\textwidth]{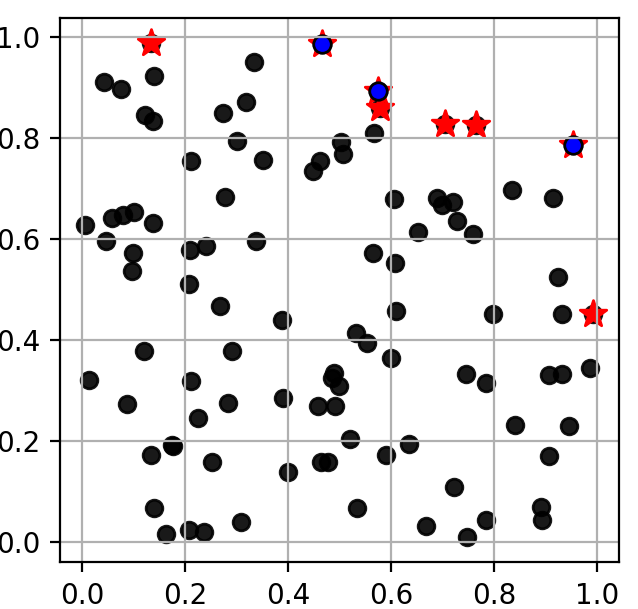}
    \caption{$n=100$}
    \label{fig:n100_d2}
  \end{subfigure}

  \caption{Empirical reward distributions and selected cover sets for three problem sizes with $D=2$. The axes represent the reward values in each dimension. Black circles indicate the true mean rewards of arms, red stars denote the Pareto-optimal (PO) arms, and blue circles highlight the arms selected by the algorithm to form set \(B\).}

  \label{fig:D2}
\end{figure}


\section{Conclusion}
This paper introduced a novel regret metric for multi-objective multi-armed bandits (MO-MAB) designed to overcome the limitations of existing metrics by comprehensively accounting for all objectives simultaneously. We further defined the concept of efficient Pareto-optimal arms as those residing on the convex hull of the Pareto-optimal front. Leveraging this metric, we proposed a new algorithm proven to achieve sublinear regret for both Pareto-optimal and efficient Pareto-optimal arms in stochastic environments.

While this work presents the first algorithm rigorously evaluated under the proposed regret framework, our analysis reveals limitations in its theoretical regret bound and computational complexity. Future research should prioritize developing algorithms achieving tighter regret guarantees, potentially through alternative algorithmic designs or refined analytical techniques. Moreover, the regret framework established here provides a formal foundation for extending the analysis to the significantly more challenging setting of adversarial MO-MAB, making the development of efficient algorithms robust to non-stochastic rewards a critical and promising direction for advancement in multi-objective online decision-making.



\end{document}